\newtheorem{theorem}{Theorem}
\newtheorem{problem}{Problem}
\newtheorem{assumption}{Assumption}
\newtheorem{definition}{Definition}
\newtheorem{remark}{Remark}
\newtheorem{example}{Example}
\title{\LARGE \bf
Scalable Multi-Robot Task Allocation and Coordination under Signal Temporal Logic Specifications
}
\author{Wenliang Liu, Nathalie Majcherczyk, and Federico Pecora
\thanks{All authors are with Amazon Robotics, North Reading, MA, USA 
        {\tt\small \{liuwll,majcherc,fpecora\}@amazon.com}}%
}
\begin{document}

\maketitle
\thispagestyle{empty}
\pagestyle{empty}

\begin{abstract}

Motion planning with simple objectives, such as collision-avoidance and goal-reaching, can be solved efficiently using modern planners. However, the complexity of the allowed tasks for these planners is limited. On the other hand, signal temporal logic (STL) can specify complex requirements, but STL-based motion planning and control algorithms often face scalability issues, especially in large multi-robot systems with complex dynamics. In this paper, we propose an algorithm that leverages the best of the two worlds.  We first use a single-robot motion planner to efficiently generate a set of alternative reference paths for each robot. Then coordination requirements are specified using STL, which is defined over the assignment of paths and robots' progress along those paths. We use a Mixed Integer Linear Program (MILP) to compute task assignments and robot progress targets over time such that the STL specification is satisfied. Finally, a local controller is used to track the target progress. Simulations demonstrate that our method can handle tasks with complex constraints and scales to large multi-robot teams and intricate task allocation scenarios.

\end{abstract}

\section{Introduction}

Temporal logics such as Linear Temporal Logic (LTL)~\cite{pnueli1977temporal} and Signal Temporal Logic (STL)~\cite{maler2004monitoring} provide a formal way to specify complex and time-related requirements for a given system, and have been widely used in robotics. 
In this paper, we focus on multi-robot systems subject to STL specifications. 
Deploying a fleet of autonomous robots to meet any form of specification or objective requires overcoming four challenges: (1)~determining how tasks should be distributed among robots (task assignment); (2)~deciding how robots should negotiate to collaboratively finish a task or use the shared resources (coordination); (3)~computing trajectories that robots should follow to complete their tasks (motion planning); (4)~ensuring that these trajectories are executed accurately respecting the robots' dynamics (control). 

Numerous STL control synthesis algorithms have been proposed 
\cite{raman2014model, sadraddini2015robust, pant2017smooth, haghighi2019control}. These methods aim to find a controller to make a system satisfy a given STL specification. Although called ``STL control synthesis", these methods tackle the motion planning and control problems 
simultaneously, i.e., they aim to find the state trajectory that satisfies the STL specification and the control inputs that realize this trajectory. When the system dynamics are complex and the planning horizon is large, these methods become computationally expensive, and cannot 
scale to large multi-robot systems. 

Most methods for coordinating robots using temporal logic~\cite{chen2011synthesis, schillinger2018simultaneous, kantaros2020stylus, sahin2019multirobot, leahy2021scalable, 9345973} discretize the state space to a graph or automaton, so they can decouple the control problem and focus on jointly solving the task allocation, coordination, and motion planning problems on the graph. However, an expressive enough abstraction of the state space can result in a very large graph, which again makes the problem computationally intractable. In~\cite{liu2023robust, liu2017communication}, continuous state space is considered and the control problem is solved jointly, which also scales badly. 
Another drawback of the above approaches is that the time steps needed might be too large for long-horizon planning. Unfortunately, STL specifications often span long horizons. 
A method based on time-stamped waypoints is proposed in~\cite{sun2022multi} which decouples control from motion planning without discretizing the state space, and is able to perform long-horizon planning with a relatively small number of time stamps.

All the above methods attempt to solve problems (1)--(4) 
simultaneously. Although this 
enlarges the search space, 
potentially improving the 
quality of the solution, computational complexity restricts scalability. 
In this paper, we propose a multi-robot task allocation and coordination algorithm that is decoupled from motion planning and control. This is motivated by the insights that most multi-robot tasks require each robot to move from the source location to the target location (or a sequence of target locations) while avoiding obstacles and to obey temporal and logical rules induced by their coordination. The former can be efficiently solved using heuristic- or sampling-based single-robot motion planning algorithms~\cite{hart1968formal,karaman2011sampling,kavraki1996probabilistic}. 
However, these planners cannot enforce the latter, which is where STL is truly needed.

In this paper, we first use single-robot motion planning 
to efficiently generate a set of alternative reference paths for each robot. Then we use STL to define coordination specifications over the progress of robots along their paths. In this way, 
each robot only 
decides which path to take and how fast to track the path, without considering the entire state space or its dynamics. A similar idea is introduced in~\cite{forte2021online,mannucci2021provably}, which only consider simple constraints rather than STL specifications. Similar to~\cite{sun2022multi}, we search for a sequence of time-stamped target progress points for the robots to track along their paths, which can be solved using Mixed Integer Linear Program (MILP). Finally, the target progress points on a reference path are tracked using a local controller of each robot, which guarantees the satisfaction of the STL specification. Our approach breaks down the STL-based coordination problem into three parts: task allocation and coordination subject to STL specifications; single-robot motion planning; and single-robot control. Although this approach slightly reduces the search space of the overall problem due to the decoupling, it significantly reduces the computational cost. We find that in practice, it is enough to realize many interesting multi-robot use cases. 

The contributions of this paper are threefold. (1)~We propose a scalable algorithm to operate fleets of robots subject to STL specifications by decoupling motion planning and control from the task allocation and coordination problems. (2)~We prove formally 
that our algorithm 
satisfies the STL specification. (3)~We evaluate our algorithm on a variety of realistic applications, showing that it outperforms state-of-the-art methods in terms of running time and is able to scale to large teams and complex STL specifications.

\section{Problem Formulation}
In this paper, we use $\mathbb R$ and $\mathbb B$ to denote real values and binary values, respectively. For a vector $x\in\mathbb R^n$, let $B_l(x,\epsilon):=\{y\in\mathbb R^n\ |\ \|y-x\|_l<\epsilon\}$ be the $\epsilon$-ball centered at $x$, where $\|\cdot\|_l$ denotes the $l$-norm.

\subsection{System Model}
Consider a set of $N$ robots $\{r_1,\ldots,r_N\}$ sharing an obstacle-free space $\mathcal W_{\mathrm{free}}\subseteq\mathbb R^3$. Each robot $r_i$ is defined as a tuple $\langle Q_i, q_i^0, R_i, \{p_i^j\}_{j=1}^{M_i}\rangle$ where:
\begin{itemize}
    \item $Q_i$ is the space of obstacle-free configurations of $r_i$;
    \item $q_i^0\in Q_i$ is the initial configuration of $r_i$;
    \item $R_i:Q_i\rightarrow 2^{\mathcal W_{\mathrm{free}}}$ maps the configuration of $r_i$ to a geometry describing the space occupied by $r_i$;
    \item $\{p_i^j\}_{j=1}^{M_i}$ is a set of $M_i$ reference paths assigned to $r_i$, where a reference path $p_i^j:[0,g_i^j]\rightarrow Q_i$ maps  progress between $0$ and $g_i^j$ to a configuration, $g_i^j$ is the maximum progress corresponding to the goal configuration which is proportional to the path length. Here, $\{p_i^j\}_{j=1}^{M_i}$ corresponds to potential tasks assigned to $r_i$.
\end{itemize}

We assign a vector $\mathbf z_i\in\mathbb B^{M_i}$ consisting of $M_i$ binary variables $\mathbf z_i = [z_i^1,\ldots,z^{M_i}_i]^\top$ to each robot $r_i$, indicating which reference path is selected. Specifically, $z_i^j=1$ indicates $p_i^j$ is selected. Since each robot can follow one and only one reference path, we have the following constraint:
\begin{equation}
\label{eq:milp-ta}
    \sum_{j=1}^{M_i} z_i^j = 1, \ i=1,\ldots,N.
\end{equation}
Let the joint selection vector be $\mathbf z = [\mathbf z_1^\top,\ldots,\mathbf z_N^\top]^\top$, which is the concatenation of all selection vectors for all robots.

The temporal profile of $r_i$ is $\sigma_i:[0,T]\rightarrow R_{\geq0}$, which maps time to the progress on its selected path, $T$ is the time bound. We assume $\sigma_i(t)$ is monotonically increasing. We denote the joint temporal profile as $\bm\sigma:[0,T]\rightarrow\mathbb R^N_{\geq0}$, which maps time to the joint progress of all robots. For simplicity, we occasionally omit the variable $t$ and directly use $\sigma_i\in\mathbb R_{\geq0}$ to denote the progress when the context makes it clear. 

\subsection{Reference Path STL}
In this paper, we define a fragment of STL over $\mathbf s = [\mathbf z, \bm\sigma]$, which is the concatenation of the joint selection vector $\mathbf z$ and the joint temporal profile $\bm\sigma$. In the following, we refer to this STL fragment as reference-path STL (RP-STL).

Different from the general case of STL, the predicate of RP-STL $\mu_i$ is restricted to a specific robot $r_i$ in the form of:
$ \sigma_i(t) - \mathbf b_i^\top\mathbf z_i\geq 0$,
where $\mathbf b_i=[b_i^1,\ldots,b_i^{M_i}]^\top\in\mathbb R^{M_i}$. Depending on the value of $\mathbf b_i$, the predicate $\mu_i$ can have different interpretations. For example, if $b_i^1 = 10$ and $b_i^j = M$ $\forall j\neq 1$, where $M$ is a large value, then $\mu_i$ is evaluated as \emph{true} at time $t$ if and only if the path $p_i^1$ is selected and $\sigma_i(t) \geq 10$. On the contrary, if $b_i^j = -M$, $\forall j\neq 1$, then $\mu_i$ is automatically satisfied if the path $p_i^1$ is not selected. It is only evaluated as \emph{false} if $p_i^1$ is selected but $\sigma_i(t) < 10$. 

We also define a counting formula for RP-STL as a tuple $(\{\varphi_l\}_{l=1}^L, m)$ where $\{\varphi_l\}_{l=1}^L$ is a set of $L$ RP-STL formulas and $m\leq L$ is a positive integer. A counting formula is evaluated as \emph{true} if and only if there are at least $m$ subformulas $\varphi_l$ evaluated as \emph{true}. Although a counting formula can be translated into standard STL using combinatorially many disjunctions, our definition provides a concise way to formulate this kind of requirements, which is very useful in practice. In addition, in Sec.~\ref{sec:milp} we propose a MILP encoding for counting formulas, which avoids introducing combinatorially many binary variables.

We recursively define the \emph{syntax} of RP-STL as:
\begin{equation}
    \label{eq:stl-syntax}
    \begin{aligned}
        \varphi = &\mu_i\ |\ \neg\mu_i \ |\ (\{\varphi_l\}_{l=1}^L, m)\ |\ \varphi_1\land\varphi_2 \ |\ \varphi_1\lor\varphi_2 \\
        &|\ F_{[a,b]}\varphi \ |\ G_{[a,b]}\varphi \ |\ \varphi_1 U_{[a,b]} \varphi_2,
    \end{aligned}
\end{equation}
where $\varphi$, $\varphi_1$, $\varphi_2$, $\varphi_l$ are RP-STL formulas, $\neg$, $\land$, $\lor$ are the \emph{negation}, \emph{conjunction}, and \emph{disjunction}, $F_{[a,b]}$, $G_{[a,b]}$, $U_{[a,b]}$ are the temporal operators \emph{eventually}, \emph{always}, and \emph{until}. 

The fact that a signal $\bm s$ satisfies an RP-STL formula $\varphi$ at time $t$ is denoted as $(\bm s,t) \models \varphi$. Intuitively, $(\bm s,t)\models F_{[a,b]}\varphi$ states that $\varphi$ must become true at some time point in $[t+a,t+b]$, $(\bm s,t)\models G_{[a,b]}\varphi$ means that $\varphi$ must be true at all time points in $[t+a,t+b]$, and $(\bm s,t)\models \varphi_1 U_{[a,b]}\varphi_2$ requires that $\varphi_2$ becomes true at some time in $[t+a,t+b]$ and $\varphi_1$ is true at all time before that. The satisfaction of the counting formula $(\bm s,t)\models (\{\varphi_l\}_{l=1}^L, m)$ is defined as $\sum\nolimits_{l=1}^L \mathbf1((\bm s,t)\models\varphi_l) \geq m$, where $\mathbf 1(\text{true})=1$ and $\mathbf 1(\text{false})=0$. The operators and counting formula can be arbitrarily nested to express more complex requirements. For simplicity, we will omit $t$ when $t=0$ and denote $(\bm s,0)\models\varphi$ as $\bm s\models\varphi$.

\begin{remark}
\label{rm:neg}
    In syntax \eqref{eq:stl-syntax}, negation can only be applied to predicates, known as the Negation Normal Form (NNF). This is not restrictive, as any STL formula can be put into NNF~\cite{baier2008principles}.  
    For the counting formula, $\neg(\{\varphi_l\}_{l=1}^L, m)$ is equivalent to $(\{\neg\varphi_l\}_{l=1}^L, L - m + 1)$. Hence, any RP-STL can be put into NNF.
    By applying negation to a counting formula, i.e., $\neg(\{\varphi_l\}_{l=1}^L,m)$, we can require that less than $m$ subformulas $\varphi_l$ 
    are \emph{true}.
\end{remark}

\subsection{Interference Constraints}

One important kind of coordination constraints that can be expressed by RP-STL is the interference constraints, i.e., how robots traverse the shared space without collision. Consider two robots $r_i$ and $r_{i'}$, and two paths $p_i^j$ and $p_{i'}^{j'}$ as in Fig.~\ref{fig:example} (left). The two paths have one critical section defined as a pair of intervals $([l,u], [l',u'])$ which satisfies (1) $\forall \sigma_i\in[l,u]$, $\exists \sigma_{i'}\in[l',u']$ such that $R_i(p_i^j(\sigma_i))\cap R_{i'}(p_{i'}^{j'}(\sigma_{i'})) \neq \emptyset$, and vice versa; (2) the intervals $[l,u]$ and $[l',u']$ are maximal, i.e., there is no way to grow them and still satisfy the first requirement. A collision is only possible if both robots are in the critical section. The interference constraint requires that when one robot is in the critical section, the other robot should not enter it. Formally, this can be written as:
    \begin{equation}
    \label{eq:interfere}
    \begin{aligned}
        \varphi_{ii'}^{jj'}  = &(\sigma_{i'} < [M  \cdots   \overset{j'\text{-th}}{l'}  \cdots  M]\mathbf z_{i'}^\top\ U_{[0,T]}\\ 
        &  \hspace{+10pt} \sigma_i \geq [-M  \cdots   \overset{j\text{-th}}u  \cdots  -M]\mathbf z_i^\top) \\
        &\lor  (\sigma_i < [M  \cdots   \overset{j\text{-th}}l  \cdots  M]\mathbf z_{i}^\top\ U_{[0,T]}\\ 
        & \hspace{+10pt} \sigma_{i'} \geq [-M  \cdots   \overset{j'\text{-th}}{u'}  \cdots  -M]\mathbf z_{i'}^\top)).
    \end{aligned}
    \end{equation}
In English, $\varphi_{ii'}^{jj'}$ means that if paths $p_i^j$ and $p_{i'}^{j'}$ are selected, then $r_{i'}$ cannot enter the critical section \emph{until} $r_i$ leaves it, \emph{or} $r_i$ cannot enter the critical section \emph{until} $r_{i'}$ leaves it. If any one of $p_i^j$ and $p_{i'}^{j'}$ is not selected, then $\varphi_{ii'}^{jj'}$ is 
satisfied.

We add the interference constraints $\varphi_{ii'}^{jj'}$ for all critical sections in all path pairs $(p_i^j,p_{i'}^{j'})$ where $i\neq i'$. These constraints avoid any collisions between robots.



\begin{remark}
The above STL formula requires exclusive use of the critical section, which might be too conservative. 
We can partially rely on the robot's own autonomy (the local controller) and relax this constraint by replacing $u$ and $u'$ in \eqref{eq:interfere} with $\delta$ and $\delta'$ where
\begin{equation*}
    \begin{aligned}
        &\delta = \inf\{\sigma\in[l,u]\ |\ \forall d>\sigma,R_i(p_i^j(d))\cap R_{i'}(p_{i'}^{j'}(l')) = \emptyset\},\\
        &\delta' = \inf\{\sigma\in[l',u']\hspace{+1pt}|\hspace{+1pt}\forall d>\sigma,R_{i'}(p_{i'}^{j'}(d))\cap R_{i}(p_{i}^{j}(l)) = \emptyset\}.
    \end{aligned}
\end{equation*}
Here, $\delta$ is the smallest progress for $r_i$ such that it will not block $r_{i'}$ from entering the critical section. 
\end{remark}

\begin{example}
\label{ex:bridge}
    Consider a team of robots $\mathcal R = \{r_1,r_2,r_3\}$ in an environment where a bridge (occupying the space $\mathcal B$) goes across a river, as shown in Fig.~\ref{fig:example} (right). Robot $r_1$ is assigned two reference paths $p_1^1$ and $p_1^2$, and $r_2$ and $r_3$ are assigned $p_2^1$ and $p_3^1$, respectively. There are totally $3$ critical sections among these $4$ paths. In addition to the interference constraints, we require that no more than $2$ robots can be on the bridge at the same time due to the weight limit of the bridge. Let $[l_i^j,u_i^j]$ be an interval on the path $p_i^j$ such that $\sigma\in [l_i^j,u_i^j]$ if and only if $R_i(p_i^j(\sigma))\cap\mathcal B\neq\emptyset$. These requirements can be written as an RP-STL formula:
    \begin{equation}
    \label{eq:example}
    \begin{aligned}
        \varphi = & \varphi_{12}^{11} \land \varphi_{12}^{21} \land \varphi_{13}^{11} \land G_{[0,T]}\\
        \neg\bigg(\Big\{&(\sigma_1 \geq [l_1^1\ M]\begin{bmatrix}z_1^1 \\ z_1^2 \end{bmatrix}) \land (\sigma_1 < [u_1^1\ -M]\begin{bmatrix}z_1^1 \\ z_1^2 \end{bmatrix}), \\
        &(\sigma_1 \geq [M\ l_1^2]\begin{bmatrix}z_1^1 \\ z_1^2 \end{bmatrix}) \land (\sigma_1 < [-M\ u_1^2]\begin{bmatrix}z_1^1 \\ z_1^2 \end{bmatrix}), \\
        &  (\sigma_2 \geq l_2^1 \land \sigma_2 < u_2^1),\ (\sigma_3 \geq l_3^1 \land \sigma_3 < u_3^1)\Big\},3\bigg).
    \end{aligned}
    \end{equation}
    Here we omit $z_2^1$ and $z_3^1$ as they are always equal to $1$.
    The counting formula includes $4$ subformulas corresponding to $4$ paths (and $4$ intervals $[l_i^j,u_i^j]$). The (negative) counting constraint requires that the progress along less than $3$ paths can be in the interval $[l_i^j,u_i^j]$ at the same time. If a path is not selected, the corresponding subformula is automatically violated (not counted as on the bridge) regardless of progress. 
\end{example}

\begin{figure}
    \centering
    \begin{subfigure}[b]{0.19\textwidth}
        \centering
        \includegraphics[width=\textwidth]{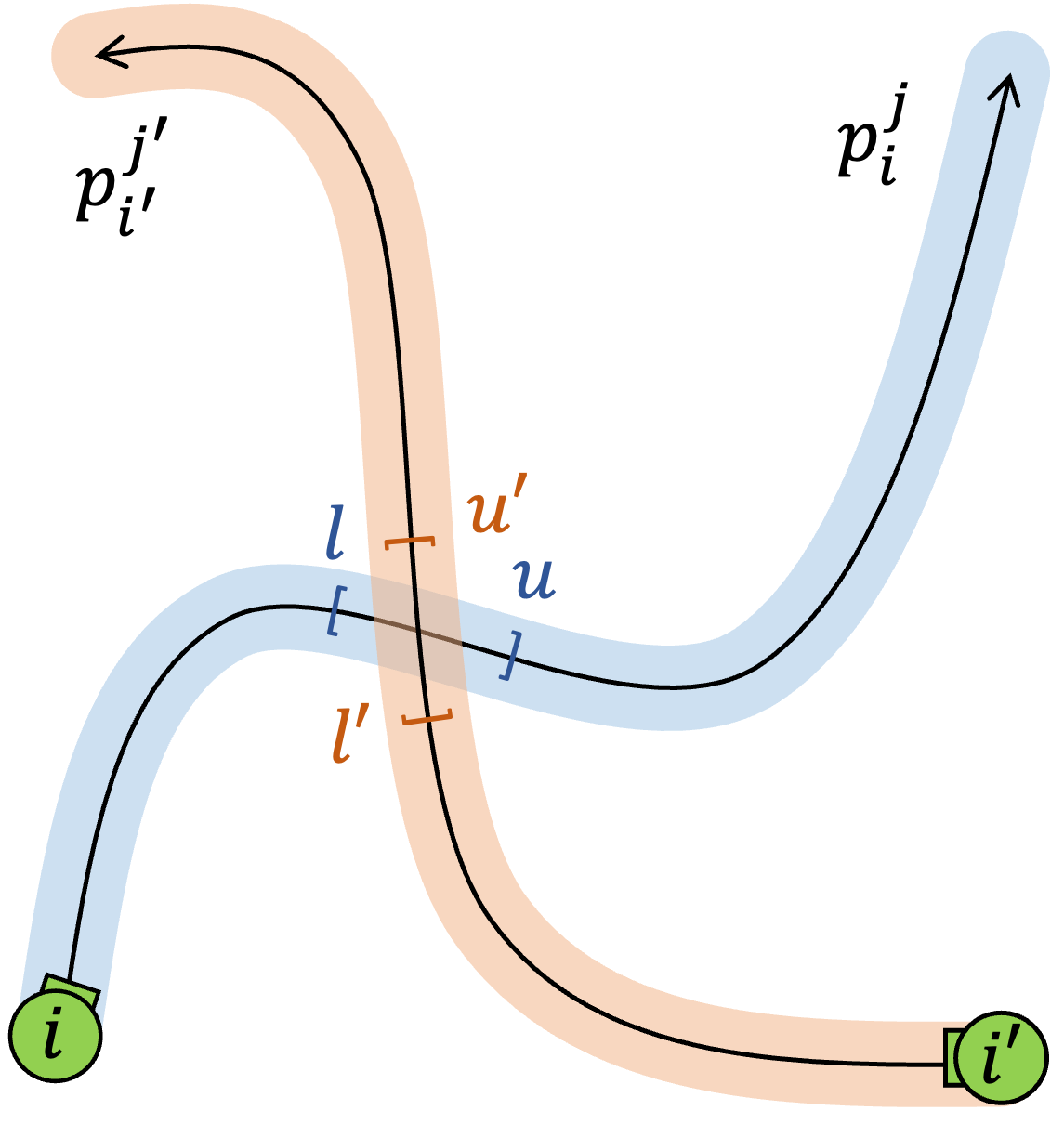}
    \end{subfigure}
    \quad
    \begin{subfigure}[b]{0.24\textwidth}
        \centering
        \includegraphics[width=\textwidth]{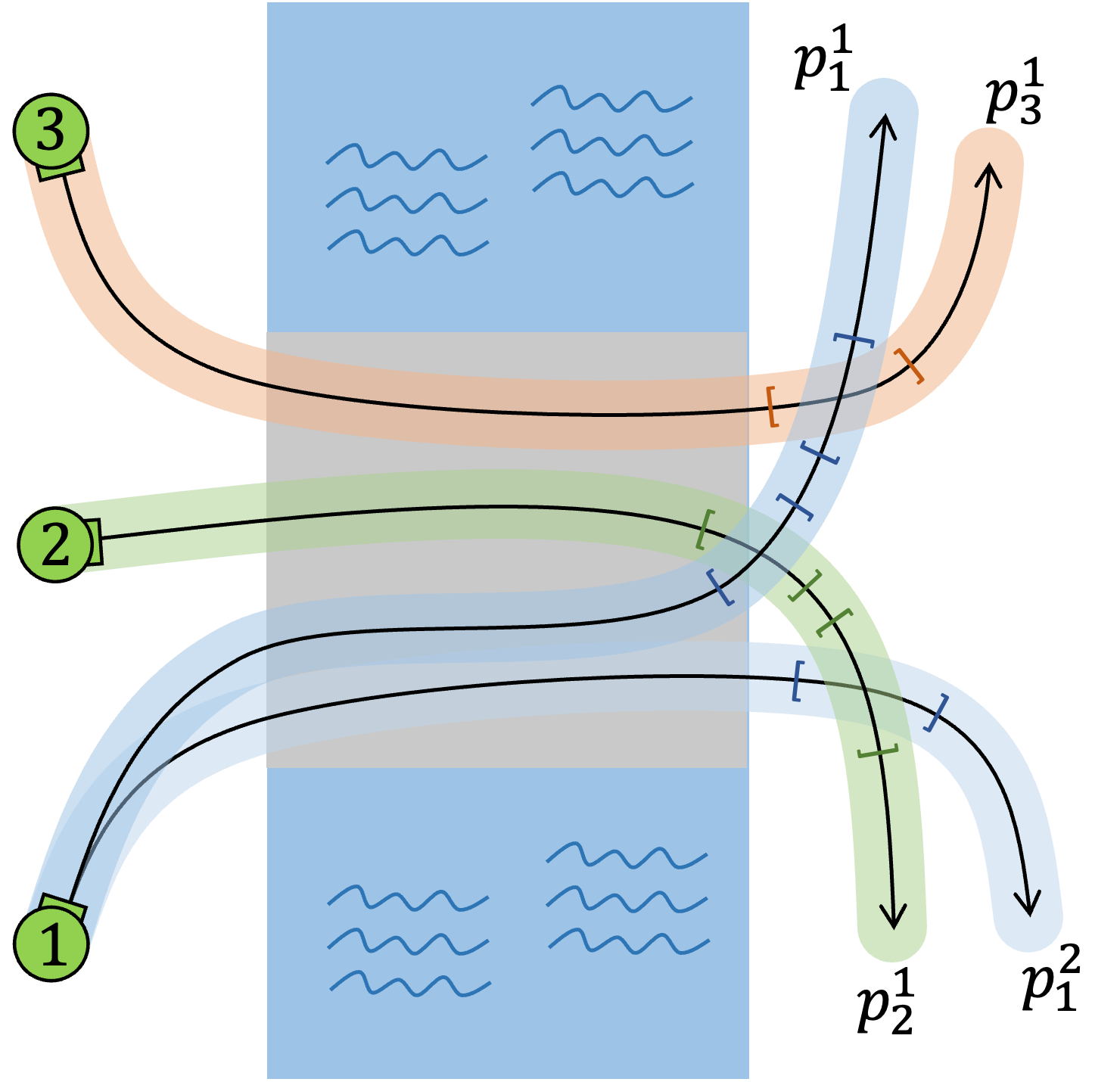}
    \end{subfigure}
    \caption{\small Left: The critical section between two robots $r_i$ and $r_{i'}$. Right: $3$ robots crossing a bridge (gray) on a river (blue).}
    \label{fig:example}
    \vspace{-8pt}
\end{figure}

\subsection{The Task Allocation and Coordination Problem}
Our goal is to find the joint selection vector $\mathbf z$ and the joint temporal profile $\bm\sigma$ that satisfy a given RP-STL specification $\varphi$ and minimizes a given cost function. Let a time-stamped joint progress (TSJP) be a pair $(t_k, \bm \sigma_{(k)})$, where $k \in \{0,1,\ldots,K\}$,  $t_k\in[0,T]$ is a time stamp and $\bm \sigma_{(k)} = [\sigma_{1,(k)},\ldots, \sigma_{N,(k)}]\in\mathbb R_{\geq0}^N$ is a corresponding joint progress. Note that we use bold font and parentheses on $k$ to distinguish $\bm\sigma_{(k)}$ from $\sigma_i$, the latter being the temporal profile of $r_i$. Instead of searching for an accurate joint temporal profile over all possible functions over time, we search for a set of joint temporal profiles constructed from a sequence of TSJPs $\{(t_k, \bm \sigma_{(k)})\}_{k=0}^K$, where $t_k$ is the $k$-th time stamp and $\bm \sigma_{(k)}$ is the $k$-th joint progress, $0=t_0<t_1<\ldots<t_K\leq T$, $\bm 0=\bm \sigma_{(0)}\leq\bm \sigma_{(1)}\leq\ldots\leq\bm \sigma_{(K)}=[[g_1^1,\ldots,g_1^{M_1}]\cdot\mathbf z_1, \ldots, [g_N^1,\ldots,g_N^{M_N}]\cdot\mathbf z_N]$. Note that the equality and inequality signs for $\bm \sigma_{(k)}$ are interpreted element-wise. Specifically, given a sequence of TSJPs and an $\epsilon>0$, we construct a set of joint temporal profiles as
\vspace{-3pt}
\begin{equation}
    \label{eq:S}
    \begin{aligned}
        \mathcal S_\epsilon\Big(\{(t_k, \bm \sigma_{(k)})\}_{k=0}^K\Big) &:= \\
        \{\bm\sigma\ |\ \bm\sigma(t_k) &\in B_\infty(\bm \sigma_{(k)},\epsilon),\ k=0,\ldots,K\}.
    \end{aligned}
\end{equation}
Now, our goal becomes finding a joint selection vector $\mathbf z$ and a sequence of TSJPs $\{(t_k, \bm \sigma_{(k)})\}_{k=0}^K$ such that $\forall \bm\sigma\in\mathcal S_\epsilon\big(\{(t_k, \bm \sigma_{(k)})\}_{k=0}^K\big)$, $[\mathbf z,\bm\sigma]\models\varphi$, and a cost is minimized.

Since the robots cannot make progress arbitrarily fast, we add the following constraint on the TSJPs for all robots $r_i$:
\begin{equation}
    \label{eq:vmax}
        |\sigma_{i,(k+1)} - \sigma_{i,(k)}|\leq v^{max}_{i}\cdot(t_{k+1}-t_k), k=0,\ldots,K-1,
\end{equation}
where $v^{max}_{i}\in\mathbb R_{\geq0}$ is the maximum speed 
for robot $i$, which is assumed to be a constant for different paths of a same robot, but can be different for different robots.
Now we formally state the task allocation and coordination problem.
\begin{problem}
\label{pb}
    Consider a set of $N$ robots. Given a set of reference paths $\{p_i^j\}_{j=1}^{M_i}$ for each robot $r_i$ and an RP-STL specification $\varphi$, find the joint selection vector $\mathbf z$ and a sequence of TSJPs $\{(t_k, \bm \sigma_{(k)})\}_{k=0}^K$ such that $\forall \bm\sigma\in\mathcal S_\epsilon\big(\{(t_k, \bm \sigma_{(k)})\}_{k=0}^K\big)$, $[\mathbf z,\bm\sigma]\models\varphi$, and a cost $\mathcal L$ is minimized:
\begin{equation}
    \label{eq:optimization}
    \begin{aligned}
        &\min_{\mathbf z, \{(t_k, \bm \sigma_{(k)})\}_{k=0}^K}\ \mathcal L(\mathbf z, \{(t_k, \bm \sigma_{(k)})\}_{k=0}^K) \\
        &\hspace{+10pt}\text{s.t. } [\mathbf z,\bm \sigma]\models \varphi,\ \forall\bm\sigma\in\mathcal S_\epsilon\big(\{(t_k, \bm \sigma_{(k)})\}_{k=0}^K\big),\\
        &\hspace{+10pt}\{(t_k, \bm \sigma_{(k)})\}_{k=0}^K\ \text{satisfies}\ \eqref{eq:vmax}, \mathbf z \ \text{satisfies}\  \eqref{eq:milp-ta}.
    \end{aligned}
    \vspace{+5pt}
\end{equation}
\end{problem}

Some examples of the cost function include the makespan $t_K$ or the sum of travel time $\sum_{i=1}^N T_i$ where
\begin{equation}
\label{eq:T_i}
    T_i = \inf \{t_k\ |\ \sigma_{i,(k)}=[g_i^1,\ldots,g_i^{M_i}]\cdot\mathbf z_i\}.
\end{equation}

\section{MILP-Based Solution}
\label{sec:milp}
We solve Problem~\ref{pb} by encoding \eqref{eq:optimization} into a Mixed-Integer Linear Program (MILP), which can be solved efficiently using off-the-shelf solvers such as Gurobi~\cite{gurobi}. 

To encode all constraints in \eqref{eq:optimization} as mixed-integers linear constraints, we first encode these constraints into a Linear and Counting Constraints Formula (LCCF), which is a logic sentence of atomic formulas connected by conjunctions, disjunctions, and counting operators. Each atomic formula is in the form of $\texttt{LE}\geq0$, where $\texttt{LE}$ is a linear expression of the continuous and binary variables, including the TSJPs and the selection vector. Second, we eliminate all the disjunctions and counting operators in the LCCF, which makes the LCCF a conjunction of mixed integer linear constraints. Finally, we add all these constraints into the MILP solver to find the solution. The encoding is inspired by \cite{sun2022multi}, which searches for a piece-wise linear path connecting time-stamped waypoints in the state space to satisfy STL specifications. Different from \cite{sun2022multi}, here we do not require linearity on the segments between TSJPs. We only assume the temporal profile for each robot is monotonically increasing.

\subsection{Constructing the LCCF}
\label{sec:lccf}
\noindent\paragraph{Selection vector encoding}
Since constraints \eqref{eq:milp-ta} are already in the form of $\texttt{LE}\geq0$, we just conjunct them together to form a single LCCF.

\paragraph{RP-STL encoding}
We inductively construct an LCCF that encodes the RP-STL formula $\varphi$ in \eqref{eq:optimization} by constructing an LCCF $z_k^{\varphi}$ for each segment between two adjacent TSJPs such that $z_k^{\varphi}$ is true if and only if $\forall t\in[t_k,t_{k+1}]$ and 
$\forall \bm\sigma\in\mathcal S_\epsilon\big(\{(t_k, \bm \sigma_{(k)})\}_{k=0}^K\big)$, $([\mathbf z,\bm \sigma],t)\models \varphi$. We refer to this property as the soundness property. Then $z_0^\varphi$ will be the LCCF we want. Specifically, $z_k^\varphi$ is recursively encoded as:
\begin{align}
\label{eq:z-mu}
    &z_k^{\mu_i} = (\sigma_{i,(k)}-\bm b_i^\top\bm z_i \geq \epsilon) \land (\sigma_{i,(k+1)}-\bm b_i^\top\bm z_i \geq\epsilon),\\
    \label{eq:z-cnt}
    &z_k^{(\{\varphi_l\}_{l=1}^L, m)} = (\sum_{l=1}^{L}\mathbf 1(z_k^{\varphi_l})\geq m),\\
    & \begin{aligned}
    z_k^{G_{[a,b]}\varphi} &= \bigwedge_{l=0}^{K-1} ([t_l,t_{l+1}]\\&\cap
    [t_k+a-\epsilon_t,t_{k+1}+b]\neq\emptyset\Rightarrow z_l^\varphi),
    \label{eq:z-always}\end{aligned}\\
    &\begin{aligned}z_k^{F_{[a,b]}\varphi} & = (t_{k+1}-t_k\leq b-a-\epsilon_t)\land \bigvee_{l=0}^{K-1} ([t_l,t_{l+1}]\\&\cap[t_{k+1}+a,t_k+b-\epsilon_t]\neq\emptyset\land z_l^\varphi),\label{eq:z-eventually}\end{aligned}
\end{align}
where $\epsilon_t$ is a positive value that adds robustness to the timimg, $\Rightarrow$ is the \emph{implication} operator defined as $\varphi_1\Rightarrow\varphi_2$ iff $\neg\varphi_1\lor\varphi_2$.
The construction laws for \emph{Until} (as a combination of \eqref{eq:z-always} and \eqref{eq:z-eventually}) and Boolean operators are omited and as reported in~\cite{sun2022multi}. 
By using 
these and \eqref{eq:z-mu}-\eqref{eq:z-eventually}, we can recursively construct $z_0^\varphi$ that encodes the entire formula $\varphi$.


We prove the aforementioned soundness property by induction. We start from the predicate $\mu_i$. Intuitively, if $z_k^{\mu_i}$ defined in \eqref{eq:z-mu} is true, then any $\sigma_i$ that satisfies $\sigma_i(t_k)\in B_\infty(\sigma_{i,(k)},\epsilon)$ and $\sigma_i(t_{k+1})\in B_\infty(\sigma_{i,(k+1)},\epsilon)$ also satisfies $\mu_i$ at $t_k$ and $t_{k+1}$. Since $\sigma_i(t)$ is monotonically increasing, $\mu_i$ is satisfied at all $t\in[t_k,t_{k+1}]$. The encoding for the counting formula \eqref{eq:z-cnt} is sound by definition. The proof for other Boolean and temporal operators is similar as in~\cite{sun2022multi}. Hence, the LCCF constructed above is sound. 

\paragraph{Sum of travel time encoding}
When using makespan $t_K$ as the cost function, we do not need further encoding. 
For sum of travel time, we introduce additional variables $T_i$ with the following constraints to enforce~\eqref{eq:T_i}:
\begin{equation}
\begin{aligned}
    &(\sigma_{i,(k)}\leq [g_i^1,\ldots,g_i^{M_i}]^\top\mathbf z_i) \land (T_i \geq t_{k+1})\\
    \lor & (\sigma_{i,(k)}\geq [g_i^1,\ldots,g_i^{M_i}]^\top\mathbf z_i) \land (T_i \leq t_{k}).
\end{aligned}
\end{equation}

\subsection{Eliminating Disjunctions and Counting Operators}
Finally, we eliminate disjunction and counting formulas using the Big M method. This introduces new binary variables proportional to the number of disjunctions and subformulas in counting formulas but does not increase the number of constraints. Details are omitted here for brevity.

\subsection{Overall MILP Approach}
Now we have transformed the constraints in \eqref{eq:optimization} into linear constraints of the continuous and binary variables, which makes \eqref{eq:optimization} a MILP problem. Solving this MILP gives us the optimal solution. The computational cost of solving a MILP depends on the number of binary variables, which is $O(K^2\cdot|\varphi| + N\cdot M_i)$ in our encoding, where $|\varphi|$ is the number of operators in $\varphi$, and $M_i$ is the number of paths per robot.

\section{Local Controllers}
We use a local controller for each robot to track its selected reference path according to the sequence of TSJPs obtained by solving Problem~\ref{pb}. We make the following assumption on the local controller. Relaxation of it is discussed later. 

\begin{assumption}
    \label{as:in-ball}
    Consider a set of $N$ robots which have reached the joint progress $\bm \sigma_{(k)}$ at time $t_k$ within a distance of $\epsilon$, i.e., $\bm\sigma(t_k) \in B_\infty(\bm \sigma_{(k)},\epsilon)$. Given a target TSJP $(t_{k+1},\bm \sigma_{(k+1)})$ that satisfies \eqref{eq:vmax}, the local controllers can always make robots reach the joint progress $\bm \sigma_{(k+1)}$ within the distance $\epsilon$ at $t_{k+1}$, i.e., $\bm\sigma(t_{k+1}) \in B_\infty(\bm \sigma_{(k+1)},\epsilon)$.
\end{assumption}
\begin{theorem}
\label{thm:liveness}
    Under Assumption~\ref{as:in-ball}, given an RP-STL specification $\varphi$ and a sequence of TSJPs from solving \eqref{eq:optimization}, the joint temporal profile $\bm\sigma$ executed by the local controller that tracks these TSJPs always satisfies $\varphi$, i.e., $[\mathbf z, \bm\sigma]\models\varphi$.
\end{theorem}
\begin{proof}
    Since $\bm\sigma(0) = \bm\sigma_{(0)}$, the above theorem can be proved through induction based on Assumption~\ref{as:in-ball}.
\end{proof}

Assumption~\ref{as:in-ball} is not restrictive as many existing techniques can provide us such a local controller, e.g., the control Lyapunov functions \cite{rodriguez2014trajectory}. However, in practice it is possible that 
robots make tracking deviations $> \epsilon$, e.g., robots stalling or significantly reducing their 
speed. In such cases, it is hard to guarantee the satisfaction of the specification. 
However, under the following relaxed assumption, we can ensure that anomalies are detectable before the specification is violated: 

\begin{assumption}
    \label{as:non-exceed}
    Consider a set of $N$ robots which reaches the joint progress $\bm \sigma_{(k)}$ at time $t_k$ within a distance of $\epsilon$, i.e., $\bm\sigma(t_k) \in B_\infty(\bm \sigma_{(k)},\epsilon)$. Given a target TSJP $(t_{k+1},\bm \sigma_{(k+1)})$ that satisfies \eqref{eq:vmax}, the local controllers never make robots exceed the joint progress $\bm \sigma_{(k+1)}$ by $\epsilon$ before $t_{k+1}$, i.e., $\sigma_i(t_{k+1}) < \sigma_{i,(k+1)} + \epsilon$, $\forall i=1,\ldots,N$.
\end{assumption}
\begin{theorem}
\label{thm:safety}
    Under Assumption~\ref{as:non-exceed}, given an RP-STL specification $\varphi$ and a sequence of TSJPs from solving \eqref{eq:optimization}, if the joint progress made by the local controllers satisfies $\bm\sigma(t_k) \in B_\infty(\bm \sigma_{(k)},\epsilon)$, then $\varphi$ is not violated before $t_{k+1}$.
\end{theorem}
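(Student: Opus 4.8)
The plan is to reduce the statement to the soundness property of the LCCF encoding by exhibiting a joint temporal profile in $\mathcal S_\epsilon$ that is indistinguishable from the executed $\bm\sigma$ on $[0,t_{k+1})$. I read ``$\varphi$ is not violated before $t_{k+1}$'' as: the prefix of $\bm\sigma$ on $[0,t_{k+1})$ can be completed to a full profile satisfying $\varphi$, so no obligation of $\varphi$ whose truth is already determined on $[0,t_{k+1})$ has been falsified. Because the theorem is meant to run inside a monitoring loop, I would carry the invariant that $\bm\sigma(t_j)\in B_\infty(\bm\sigma_{(j)},\epsilon)$ for every checkpoint $j\le k$ (each check having passed at run time); the genuinely new segment is then $[t_k,t_{k+1}]$, where only Assumption~\ref{as:non-exceed}, not the stronger Assumption~\ref{as:in-ball}, is available.

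The crux is a per-segment predicate lemma. On $[t_k,t_{k+1}]$, monotonicity of $\sigma_i$ together with $\bm\sigma(t_k)\in B_\infty(\bm\sigma_{(k)},\epsilon)$ gives the lower bound $\sigma_i(t)>\sigma_{i,(k)}-\epsilon$, while Assumption~\ref{as:non-exceed} with monotonicity gives the upper bound $\sigma_i(t)<\sigma_{i,(k+1)}+\epsilon$, so $\sigma_i(t)\in(\sigma_{i,(k)}-\epsilon,\ \sigma_{i,(k+1)}+\epsilon)$ on the whole segment. I then verify that every predicate forced true by the encoding holds for $\bm\sigma$ on this band: if $z_k^{\mu_i}$ is asserted then $\mathbf b_i^\top\mathbf z_i\le\sigma_{i,(k)}-\epsilon$, so the lower bound yields $\sigma_i(t)>\mathbf b_i^\top\mathbf z_i$ and $\mu_i$ holds; symmetrically, for the analogous encoding of $\neg\mu_i$ one has $\mathbf b_i^\top\mathbf z_i\ge\sigma_{i,(k+1)}+\epsilon$, so the upper bound yields $\sigma_i(t)<\mathbf b_i^\top\mathbf z_i$ and $\neg\mu_i$ holds. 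The asymmetry is the whole point: positive predicates are secured by the lower bound from the verified checkpoint at $t_k$, and negative predicates by the upper bound from Assumption~\ref{as:non-exceed} — precisely the one-sided guarantee that survives when a robot stalls and falls short of $\bm\sigma_{(k+1)}$. On each earlier segment $j<k$ both endpoints lie in their $\epsilon$-balls, so this base case holds a fortiori.

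With the base case established, I would build a witness $\hat{\bm\sigma}\in\mathcal S_\epsilon$ by setting $\hat{\bm\sigma}=\bm\sigma$ on $[0,t_{k+1})$ and then choosing checkpoint values $\hat{\bm\sigma}(t_j)\in B_\infty(\bm\sigma_{(j)},\epsilon)$ for $j\ge k+1$ that are monotone and dominate $\lim_{t\uparrow t_{k+1}}\bm\sigma(t)$. Such a choice exists by forward propagation: since any deviation of $\bm\sigma$ from $\bm\sigma_{(k+1)}$ is bounded by $\epsilon$, the clamped value $\max(\hat\sigma_i(t_{j-1}),\,\sigma_{i,(j)})$ stays inside $B_\infty(\bm\sigma_{(j)},\epsilon)$ while preserving monotonicity. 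By construction $\hat{\bm\sigma}\in\mathcal S_\epsilon$, so the soundness of $z_0^\varphi$ gives $[\mathbf z,\hat{\bm\sigma}]\models\varphi$; since $\hat{\bm\sigma}$ and $\bm\sigma$ induce identical predicate evaluations on $[0,t_{k+1})$ and all forced predicates hold there, every obligation of $\varphi$ determined by this prefix is met, which is exactly non-violation before $t_{k+1}$.

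The main obstacle is the last segment $[t_k,t_{k+1}]$: unlike in Theorem~\ref{thm:liveness}, we cannot place $\bm\sigma(t_{k+1})$ inside $B_\infty(\bm\sigma_{(k+1)},\epsilon)$, so the symmetric ``both endpoints in their balls'' base case used to prove soundness is unavailable and must be replaced by the asymmetric argument above. A secondary subtlety is keeping the completion $\hat{\bm\sigma}$ monotone across $t_{k+1}$ when the executed profile slightly overshoots $\bm\sigma_{(k+1)}$ (permitted, since Assumption~\ref{as:non-exceed} only forbids exceeding by $\epsilon$), which the clamped forward-propagated choice handles. Everything else is the same inductive lift over the Boolean, counting, and temporal operators already used for soundness, so I would invoke that induction rather than repeat it.
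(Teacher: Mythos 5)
Your proof is correct, but it follows a genuinely different route from the paper's. The paper argues directly on the structure of $\varphi$: Assumption~\ref{as:non-exceed} protects every asserted negative predicate $\neg\mu_i$, so a deviation from the plan can only be a robot lagging on a positive predicate $\mu_i$; the timing slack $\epsilon_t$ in \eqref{eq:z-always}--\eqref{eq:z-eventually} then implies that the plan demands satisfaction strictly earlier than the semantic deadlines ($\epsilon_t$ before $a$ for $G_{[a,b]}$, before $b$ for $F_{[a,b]}$, similarly for $U$), so the first deviation from the plan occurs while every STL deadline still has slack. You instead formalize ``not violated before $t_{k+1}$'' as ``the executed prefix on $[0,t_{k+1})$ extends to a profile satisfying $\varphi$,'' and prove it by constructing a witness $\hat{\bm\sigma}\in\mathcal S_\epsilon$ that equals the executed profile on the prefix and is clamped forward through the remaining checkpoint balls, then invoking the already-established soundness of $z_0^\varphi$ as a black box; in your argument Assumption~\ref{as:non-exceed} plays exactly one role, namely that an overshoot by more than $\epsilon$ is precisely what would make a monotone continuation into $B_\infty(\bm\sigma_{(k+1)},\epsilon)$ impossible. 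Your route buys rigor and modularity: a precise notion of violation, no re-examination of each Boolean, counting, and temporal operator (the paper's ``once satisfied, always satisfied'' step needs care when positive and negative predicates are nested), and the observation that the timing margin $\epsilon_t$ is not even needed for this theorem. The paper's route buys locality: it pinpoints when a first violation could occur without constructing hypothetical futures, and it explains the purpose of $\epsilon_t$. Two minor notes: your per-segment predicate lemma, though correct and parallel to the paper's $\neg\mu_i$ observation, is subsumed by the witness construction and could be dropped; and both proofs really rely on the strengthened hypothesis that all checkpoints $j\le k$ were met --- you carry it as an explicit invariant, while the paper hides it in ``when the TSJP is violated for the first time'' --- since the stated hypothesis at $t_k$ alone cannot exclude violations prior to $t_k$.
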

\begin{proof}
    (sketch) To prove this, we need to show that when the TSJP is violated for the first time, the specification $\varphi$ is still not violated. Assumption~\ref{as:non-exceed} ensures that $\neg\mu_i$, i.e., $\sigma_i(t)<\mathbf b_i^\top\mathbf z_i$, is never violated. So the violation can only come from  $\sigma_i(t)\geq\mathbf b_i^\top\mathbf z_i$. For $G_{[a,b]}\varphi$, \eqref{eq:z-always} requires satisfaction of $\varphi$ over an interval containing $[a-\epsilon_t,b]$. Since $\bm\sigma$ is monotonically increasing, once $\varphi$ is satisfied, it is satisfied all the time later. So the first violation can only happen at the beginning of the interval, which is earlier than $a$. Hence, $G_{[a,b]}\varphi$ is still not violated. Similarly, $F_{[a,b]}\varphi$ needs to be satisfied over an interval with non-empty intersection with $[a,b-\epsilon_t]$. The first violation can only happen at the beginning of the interval, which is earlier than $b$, so $F_{[a,b]}\varphi$ is still not violated. Same for $\varphi_1U_{[a,b]}\varphi_2$, which can be seen as a combination of \emph{always} and \emph{eventually}. 
\end{proof}

In practice, we can check if every robot reaches the TSJP at each time stamp $t_k$. If the deviation is less than $\epsilon$, then we continue to track the current sequence of TSJPs, ensuring that $\varphi$ is not violated before $t_{k+1}$. If the deviation exceeds $\epsilon$ due to unforeseen factors, we replan taking these factors into account. Replanning is beyond the scope of this paper and will be explored in future work.

\section{Experimental Evaluation}
In this section, we evaluate the proposed approach on several benchmark scenarios and compare it with other methods. All experiments were run in the ARGoS simulator~\cite{Pinciroli:SI2012} on a Mac computer with M3 Pro CPU and 18 GB RAM. We use the ARA* algorithm~\cite{likhachev2003ara} in SBPL~\cite{likhachev2010search} for motion planning. 

\subsection{Scenarios}
\begin{figure*}
    \centering
    \begin{subfigure}[b]{0.24\textwidth}
        \centering
        \includegraphics[width=\textwidth]{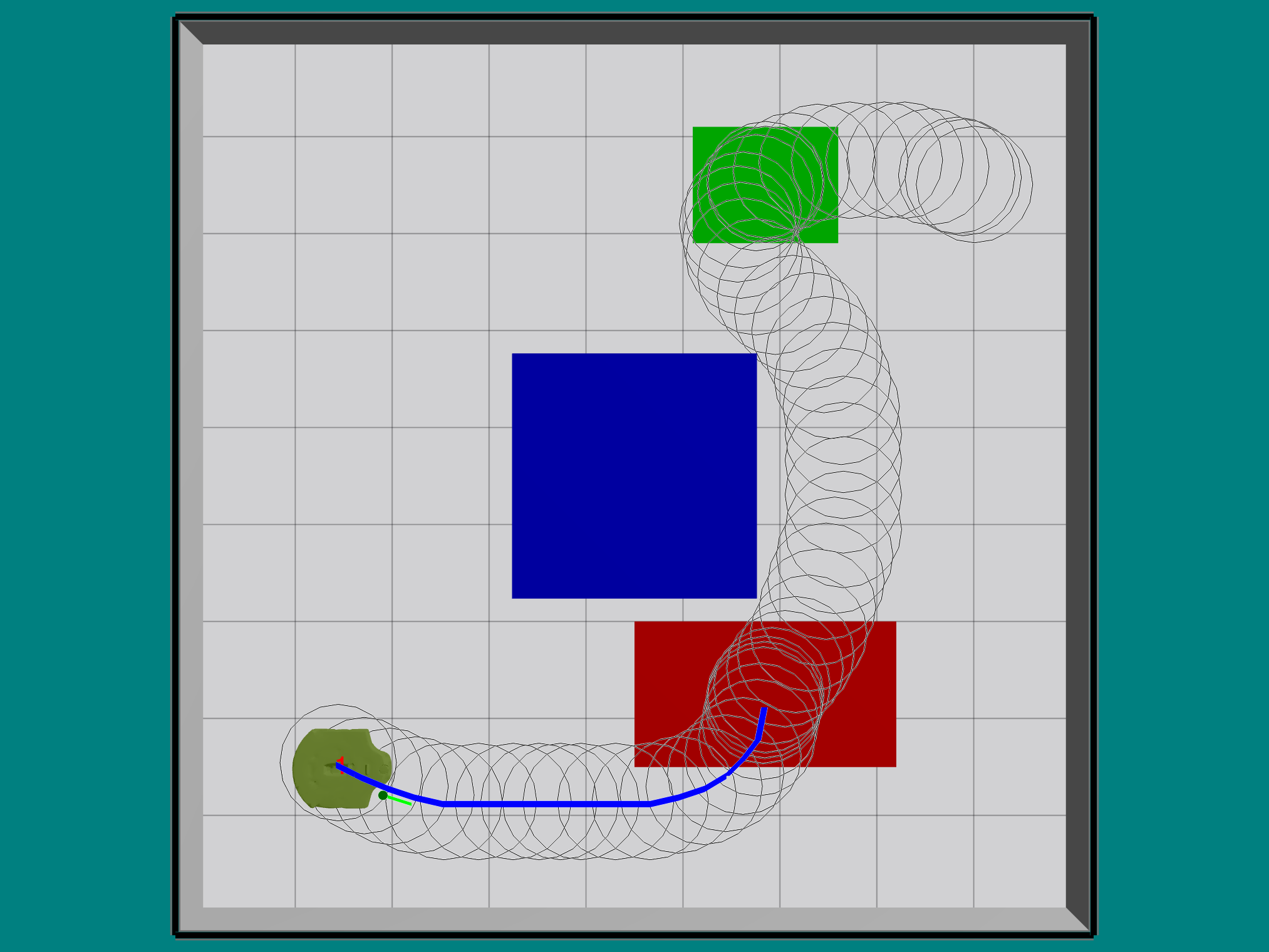}
        \caption{\texttt{stlcg}}
        \label{fig:stlcg}
    \end{subfigure}
    \begin{subfigure}[b]{0.24\textwidth}
        \centering
        \includegraphics[width=\textwidth]{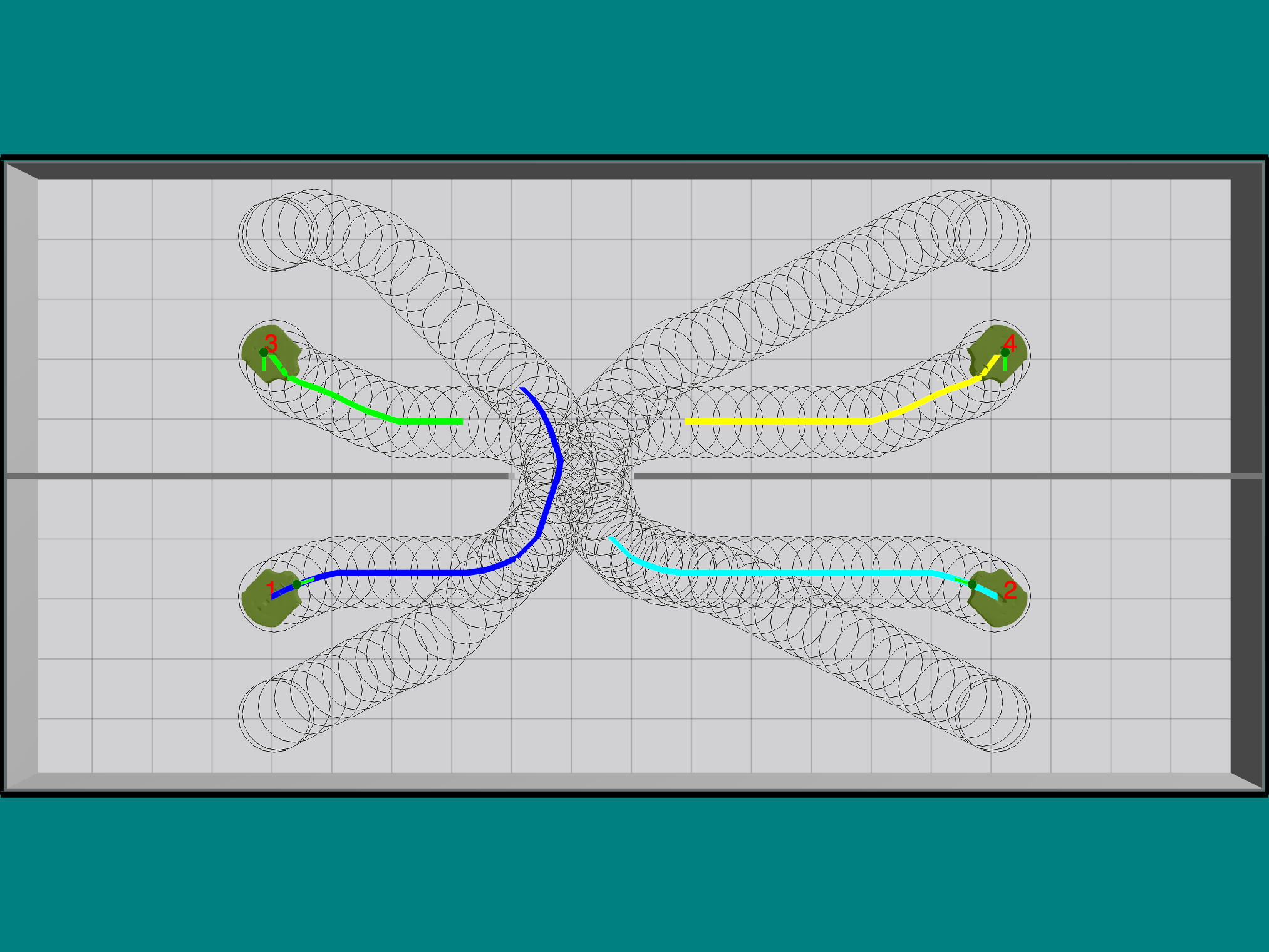}
        \caption{\texttt{door}}
        \label{fig:door}
    \end{subfigure}
    \begin{subfigure}[b]{0.24\textwidth}
        \centering
        \includegraphics[width=\textwidth]{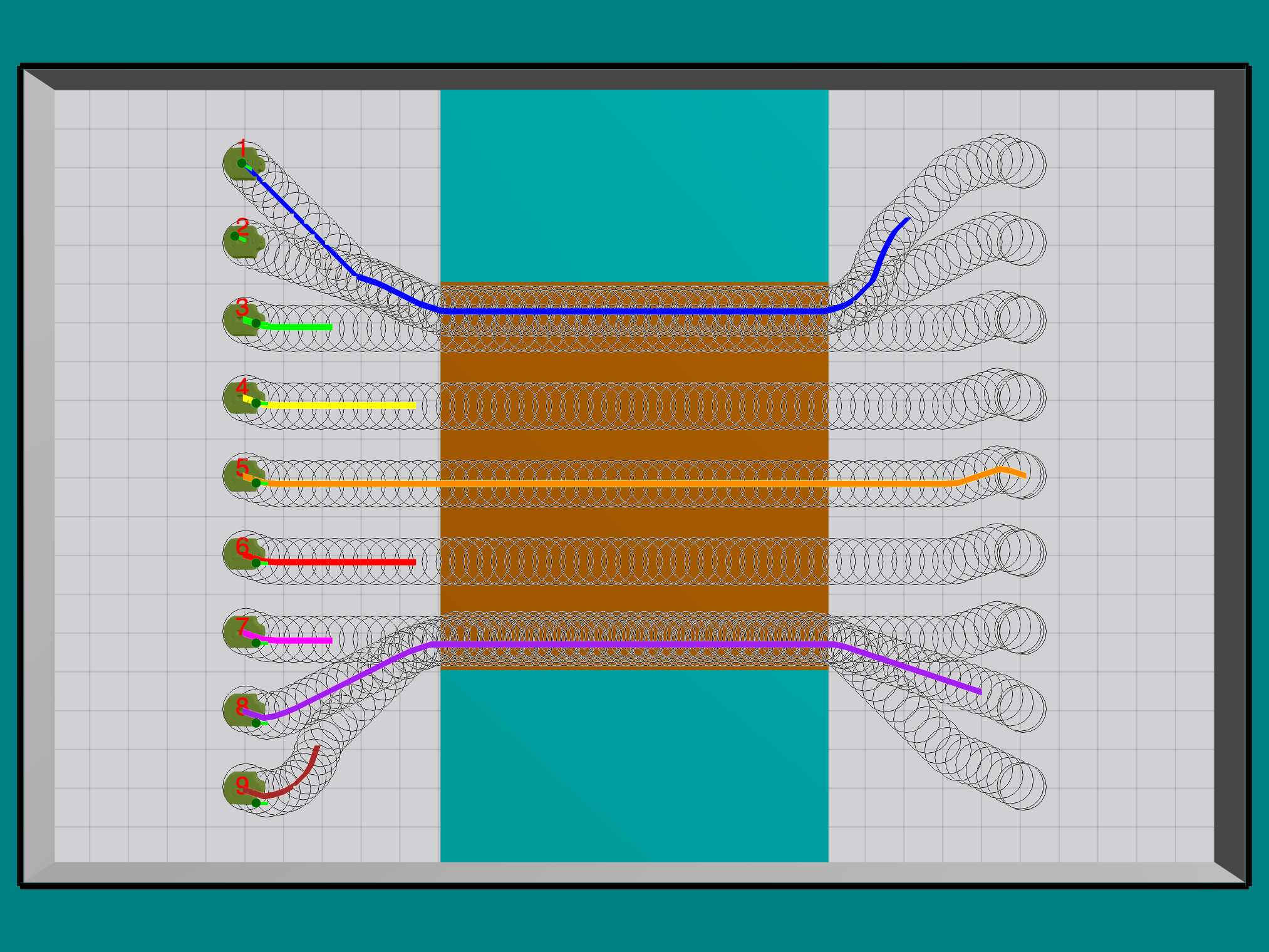}
        \caption{\texttt{bridge}}
        \label{fig:bridge}
    \end{subfigure}
    \begin{subfigure}[b]{0.24\textwidth}
        \centering
        \includegraphics[width=\textwidth]{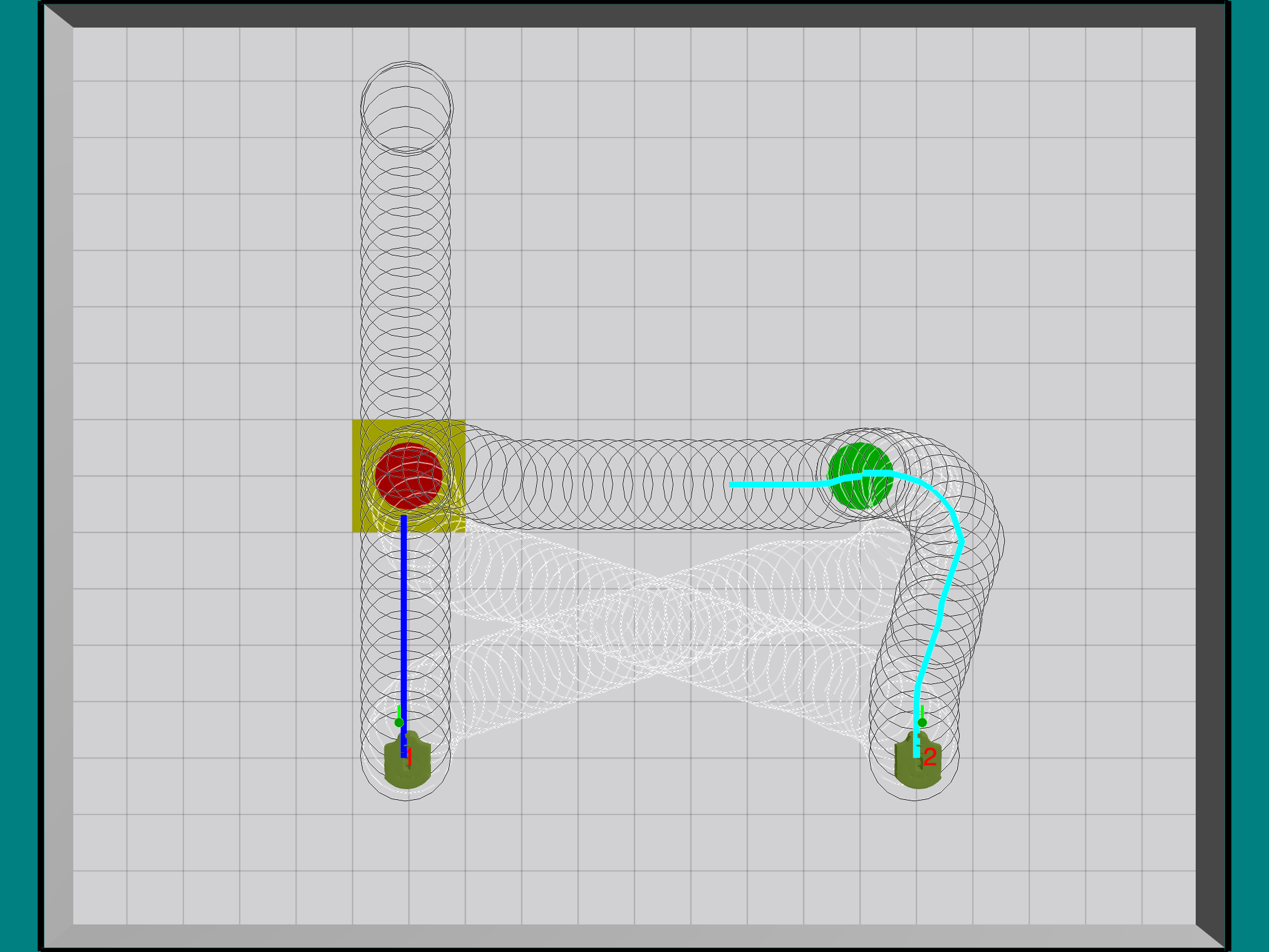}
        \caption{\texttt{cart}}
        \label{fig:cart}
    \end{subfigure}
    
    \begin{subfigure}[b]{0.24\textwidth}
        \centering
        \includegraphics[width=\textwidth]{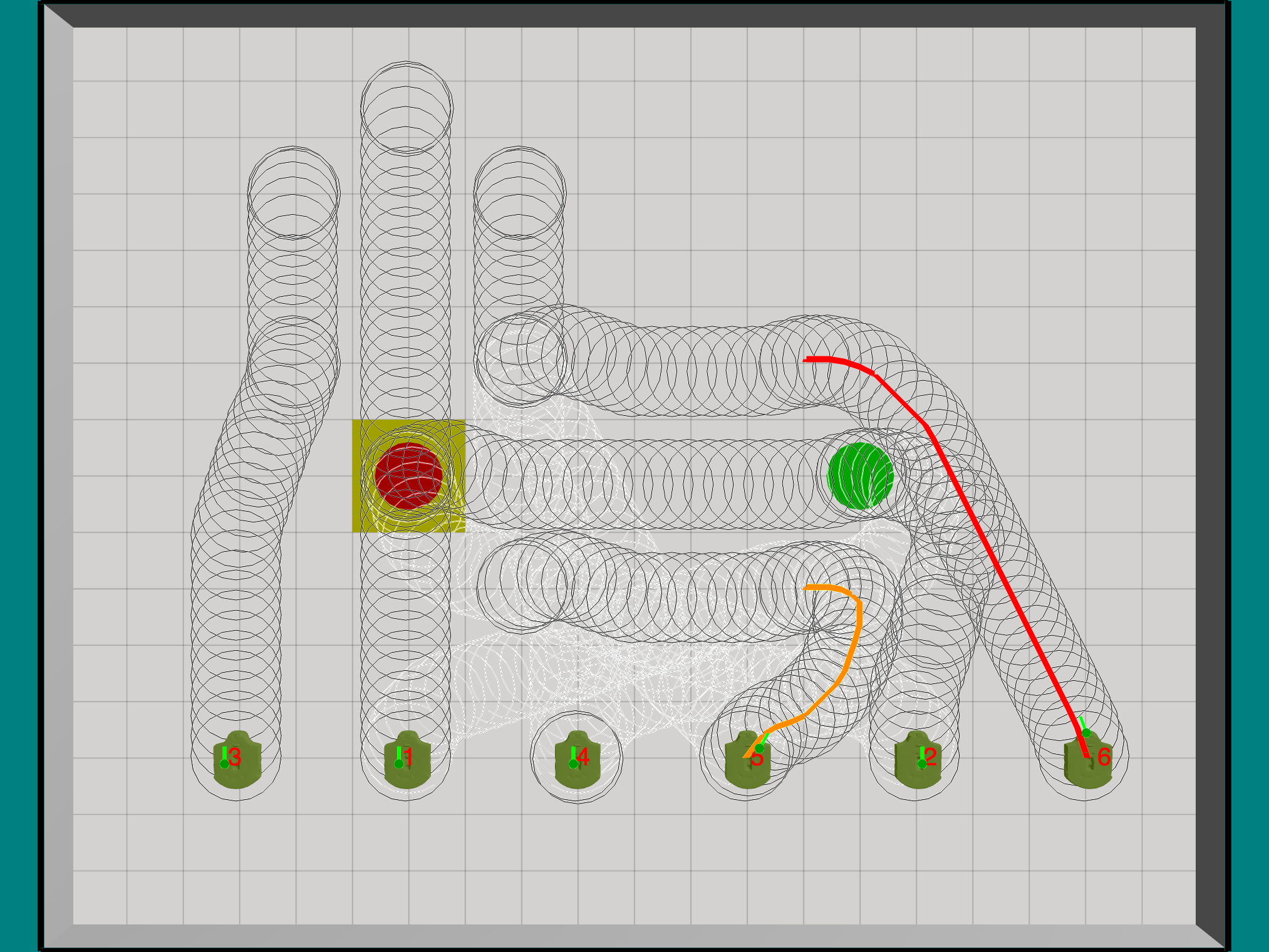}
        \caption{\texttt{escort}}
        \label{fig:escort}
    \end{subfigure}
    \begin{subfigure}[b]{0.485\textwidth}
        \centering
        \includegraphics[width=\textwidth]{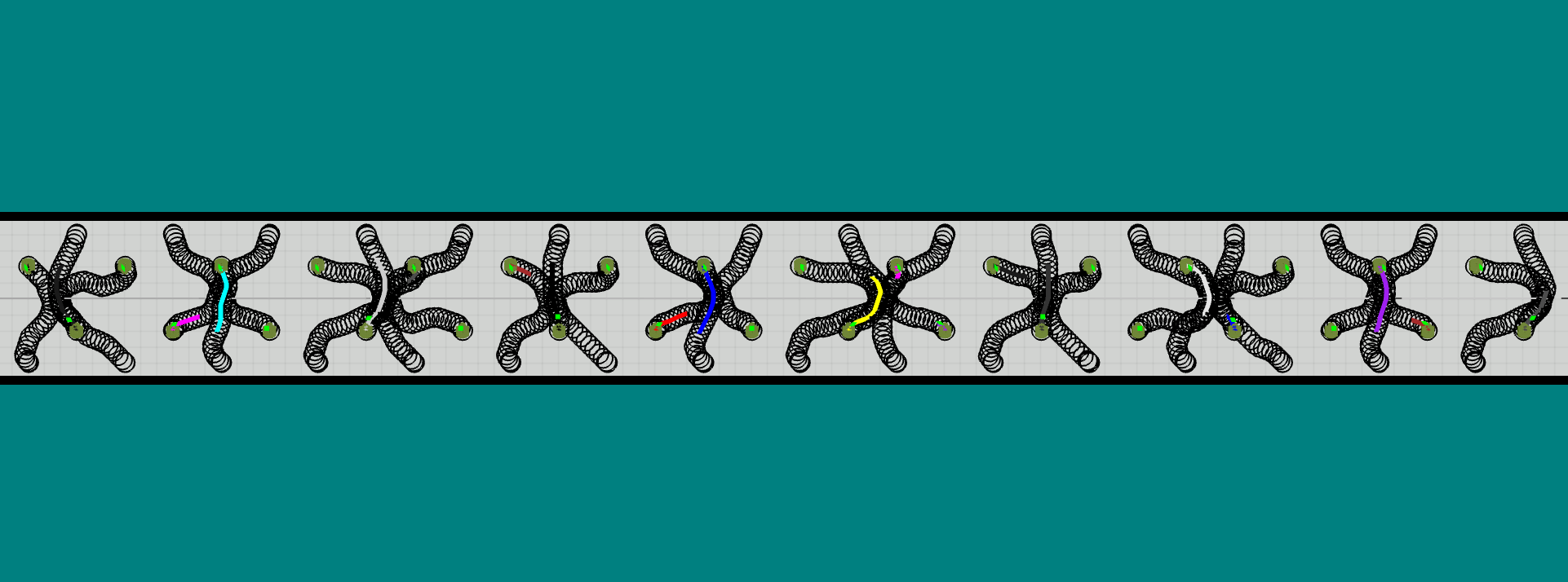}
        \caption{\texttt{door} (32 robots)}
        \label{fig:door32}
    \end{subfigure}
    \begin{subfigure}[b]{0.24\textwidth}
        \centering
        \includegraphics[width=\textwidth]{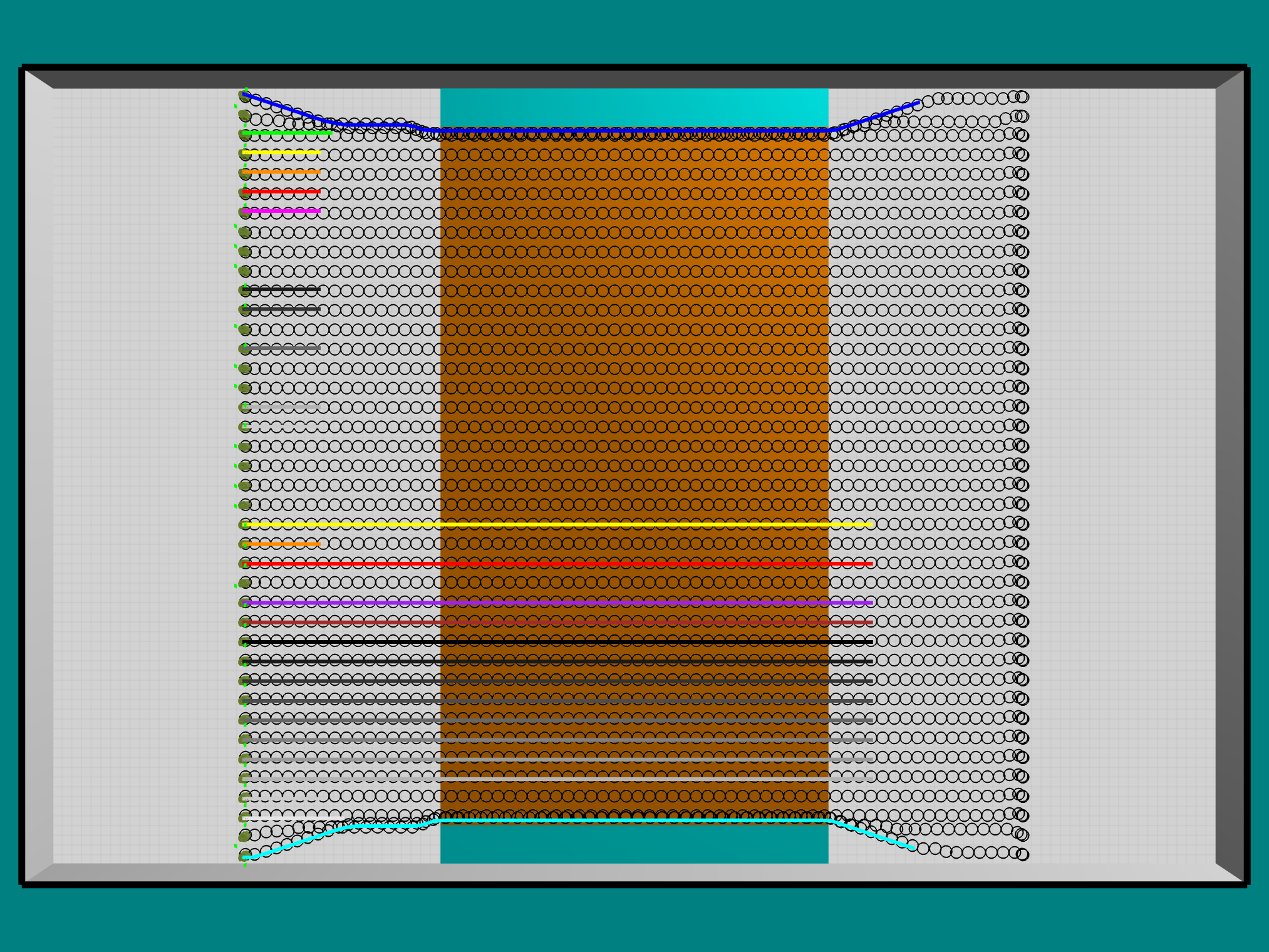}
        \caption{\texttt{bridge} (40 robots)}
        \label{fig:bridge40}
    \end{subfigure}
    \caption{\small Experiment results. Black and white circles are waypoints for the selected and unselected paths. Solid lines are the paths between the first two TSJPs. In (d), (e), the red and green circles are the full and empty carts, respectively. Videos of the experiments can be found at \url{https://www.amazon.science/scalable-multi-robot-task-allocation-and-coordination-under-signal-temporal-logic-specifications}.}
    \label{fig:experiments}
    \vspace{-8pt}
\end{figure*}
\paragraph{Single robot} We first test the algorithm on a single-robot single-path scenario (Fig.~\ref{fig:stlcg}) introduced in \cite{leung2023backpropagation}, referred to as \texttt{stlcg}, where the robot needs to visit and stay in the red region for $20s$, then visit and stay in the green region for $20s$, and always avoid the blue region. Let the intervals on the path corresponding to the red and green regions be $I_r$ and $I_g$. The specification is expressed using RP-STL as $F_{[0,T]}G_{[0,20]}(\sigma_1\in I_r) \land  F_{[0,T]}G_{[0,20]}(\sigma_1\in I_g)$. Since always avoiding the blue region can be enforced by the motion planner, we omit it in the formula. 

\paragraph{Interference} We then test the algorithm for interference constraints using a similar example as in \cite{sun2022multi} (Fig.~\ref{fig:door}), referred to as \texttt{door}, where $4$ robots need to pass through a narrow door to reach the other side of the map. 
To compare with~\cite{sun2022multi}, we assume each robot has only one reference path. The RP-STL specification is the conjunction of all interference constraints for all critical sections, see~\eqref{eq:interfere}. 

\paragraph{Counting formula} We extend the scenario in Example~\ref{ex:bridge} (Fig.~\ref{fig:bridge}), referred to as \texttt{bridge}, 
to include $9$ robots, each with one reference path. The specification  requires $\leq 3$ robots can be on the bridge simultaneously, similar to~\eqref{eq:example}. 

\paragraph{Task-specific requirements} 
We test our approach in a 
warehouse scenario with task-specific requirements. Here, we have work stations where packages are loaded into carts. In a first scenario (\texttt{cart}), once a cart is full, one robot must transport it to a truck for delivery, while a second robot must replenish the station with an empty cart.  Each robot is assigned two paths $p_i^1$ and $p_i^2$, corresponding to transporting the full or empty cart, respectively. Task assignment is open in the specification, and is computed as part of the MILP solution. 
The specification states that the robot returning the empty cart cannot proceed to the station until the other robot has removed the full cart; and that the station cannot be empty for more than $20s$, in order to avoid congestion caused by incoming packages. Let $[l_i^j,u_i^j]$ be the interval on the path $p_i^j$ where the robot is at the station (manipulating the cart). The specification is $ \varphi_{int} \land (\varphi_{swap}^{12} \land \varphi_{swap}^{21})$ where $\varphi_{int}$ is the conjunction of all interference constraints, and
\begin{equation*}
\begin{aligned}
    &\varphi_{swap}^{ii'} = (\sigma_{i'} < [M, l_{i'}^2]\cdot \mathbf z_{i'}^\top)\ U_{[0,T]}\ (\sigma_{i} \geq [u_i^1, -M]\cdot \mathbf z_i^\top) \land \\
    & G_{[0,T]}(\sigma_i\geq [l_i^1, -M]\cdot \mathbf z_i^\top \Rightarrow F_{[0,20]}\ \sigma_{i'}\geq [-M, u_{i'}^2]\cdot \mathbf z_{i'}^\top).
\end{aligned}    
\end{equation*}

We then increase the complexity of the scenario by introducing an escort task, referred to as \texttt{escort}. Since a robot may have limited visibility while carrying a cart, we require that at least two additional robots accompany it when it is carrying the cart. We introduce $4$ more robots $r_3$, $r_4$, $r_5$, $r_6$ for the escorting task. Each robot is assigned multiple paths corresponding to which cart it escorts. 
The RP-STL specification is $\varphi_{int}\land(\varphi_{swap}^{12} \land \varphi_{swap}^{21})\land\varphi_{esc}$, where $\varphi_{esc}$ (omitted for brevity, see video attachment) requires that at least two unladen robots are alongside each cart-laden robot.

\subsection{Comparison with other methods}
We compare our method with the piece-wise linear (PWL) path method~\cite{sun2022multi} and gradient-based methods. For \texttt{stlcg} and \texttt{door}, we compare with an enhanced gradient-based method from \cite{pant2017smooth} with gradient computed analytically using STLCG~\cite{leung2023backpropagation}. In other scenarios, the PWL method is not applicable because the STL syntax in \cite{sun2022multi} does not support temporal operators involving multiple robots. For these cases, we compare with the gradient-based method from~\cite{liu2023robust}, which also employs the counting semantics. However, this method cannot avoid inter-agent collision, so we simplify the problem for~\cite{liu2023robust} by omitting these constraints (which are not omitted in our approach). For all gradient-based methods, we set the time horizon $T=50$ and the time step $\Delta t = 1$. We did not compare with MILP-based MPC methods, e.g., \cite{raman2014model}, as the authors of~\cite{sun2022multi} have shown that their method outperforms~\cite{raman2014model} in these scenarios. The comparison results in terms of runtime are given in Table~\ref{tb:compare}. Note that for the PWL method, we only show the results of using sum of travel time (STT) as the objective function, as it is faster than using makespan. For gradient-based methods, the objective is to maximize the robustness and minimize the energy cost. 

For all the scenarios above, our approach finds the optimal solution, with the local controller tracking the TSJPs on time. The planned paths and the first two TSJPs for each scenario (using makespan) are shown in Fig.~\ref{fig:experiments}. Videos can be found in the supplementary material. We can see that our approach successfully satisfies constraints involving reach and avoid (\texttt{stlcg}), interference (\texttt{door}), density (\texttt{bridge}), and real-world requirements with complex STL specification and task allocation (\texttt{cart} and \texttt{escort}). It significantly outperforms other methods in terms of runtime for all the multi-robot scenarios, with minimal compromise on solution quality. The runtime of our method consists of two parts: the preprocessing phase (motion planning, interference computation, MILP encoding, etc) and the MILP solving phase. Although the motion planner is currently run sequentially for each path, this step could be greatly accelerated by parallel computing. In the \texttt{stlcg} case, the majority of the time is spent on preprocessing, with the MILP solving taking only $0.057s$.

\begin{table}
\centering
\begin{tabular}{|c|c|c|c|c|}
\hline
  Scenarios   &  Ours (MS) & Ours (STT) & PWL (STT) & Gradient \\
  \hline
  \texttt{stlcg}    & 0.612 & N/A & \textbf{0.243} & 1.211 \\
  \hline
  \texttt{door}      & \textbf{2.087} & 2.804 & 22.49 & fail \\
  \hline
  \texttt{bridge}     & \textbf{3.147} & 3.914 & N/A & 130.4 \\
  \hline
  \texttt{cart}      & \textbf{1.652} & 2.563 & N/A & 64.31\\
  \hline
  \texttt{escort}     & \textbf{32.57} & 247.6 & N/A & 269.6\\
  \hline
\end{tabular}
\caption{\small Runtime comparison (s). MS and STT refer to the use of makespan and sum of travel time as the objective. }
\vspace{-8pt}
\label{tb:compare}
\end{table}

\subsection{Scalability}
We further test the scalability of our approach using the \texttt{door} and \texttt{bridge} scenarios. In the \texttt{door} case, we increase the number of robots and doors proportionally (as shown in Fig.~\ref{fig:door32}) so that the number of critical sections increases linearly. In the \texttt{bridge} case, we increase the number of robots, the capacity of the bridge, and the width of the bridge proportionally (as shown in Fig.~\ref{fig:bridge40}) so that robots can always cross the bridge in $3$ waves while the number of critical sections is kept constant. Testing results are shown in Fig.~\ref{fig:scale}. The computation time for MILP does not increase significantly with the growth in the number of robots, especially in the \texttt{bridge} case which uses the counting constraints. Although adding robots will introduce more binary variables in the MILP, the additional constraints in these scenarios are well-structured and weakly coupled. So solvers such as Gurobi can exploit this structure to reduce computation time. The time for motion planning increases linearly with the number of robots but can be parallelized.

\begin{figure}
    \centering
    \begin{subfigure}[b]{0.23\textwidth}
        \centering
        \includegraphics[height=3.8cm]{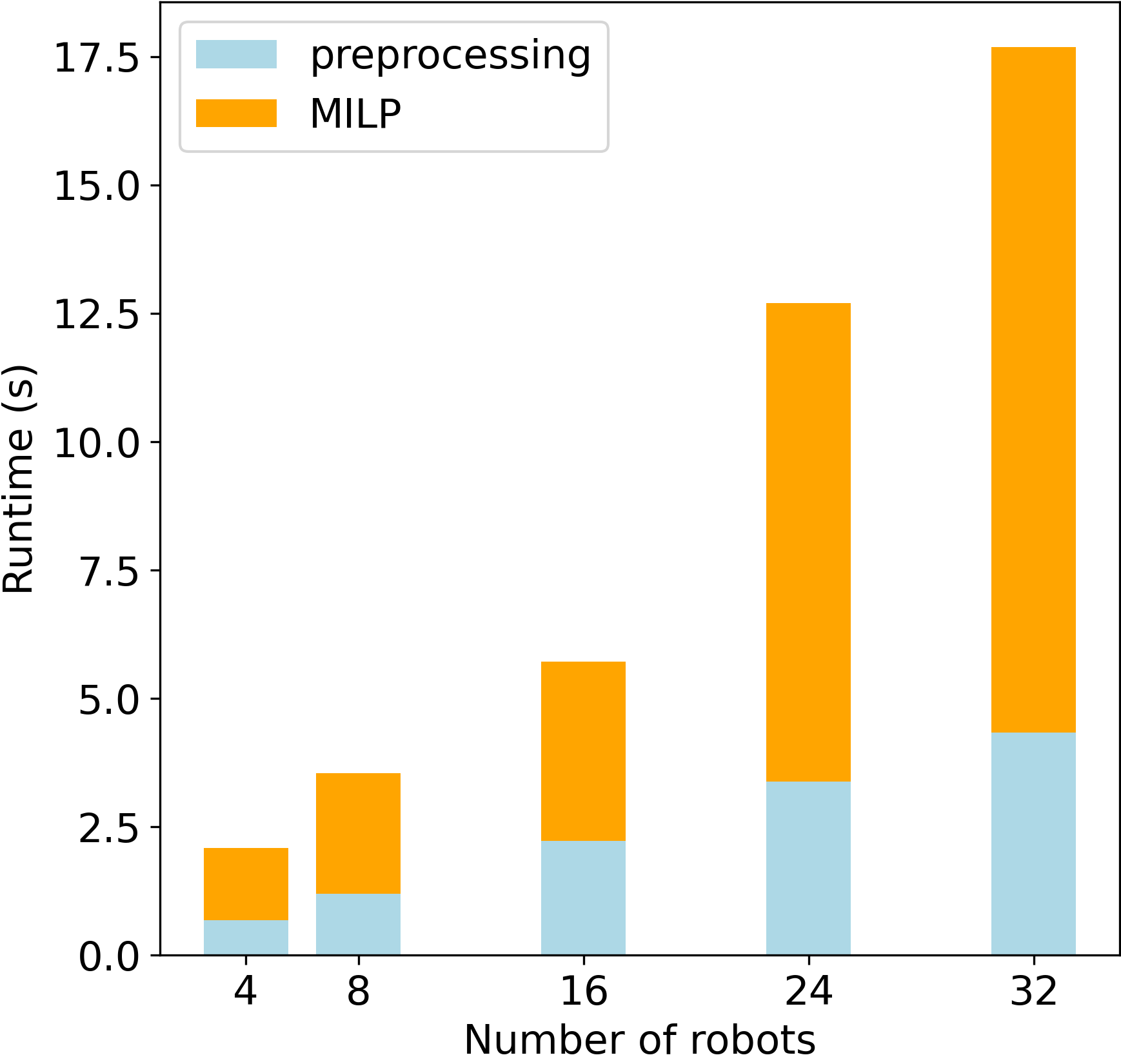}
        \caption{\texttt{door}}
        \label{fig:scale-door}
    \end{subfigure}
    \begin{subfigure}[b]{0.23\textwidth}
        \centering
        \includegraphics[height=3.8cm]{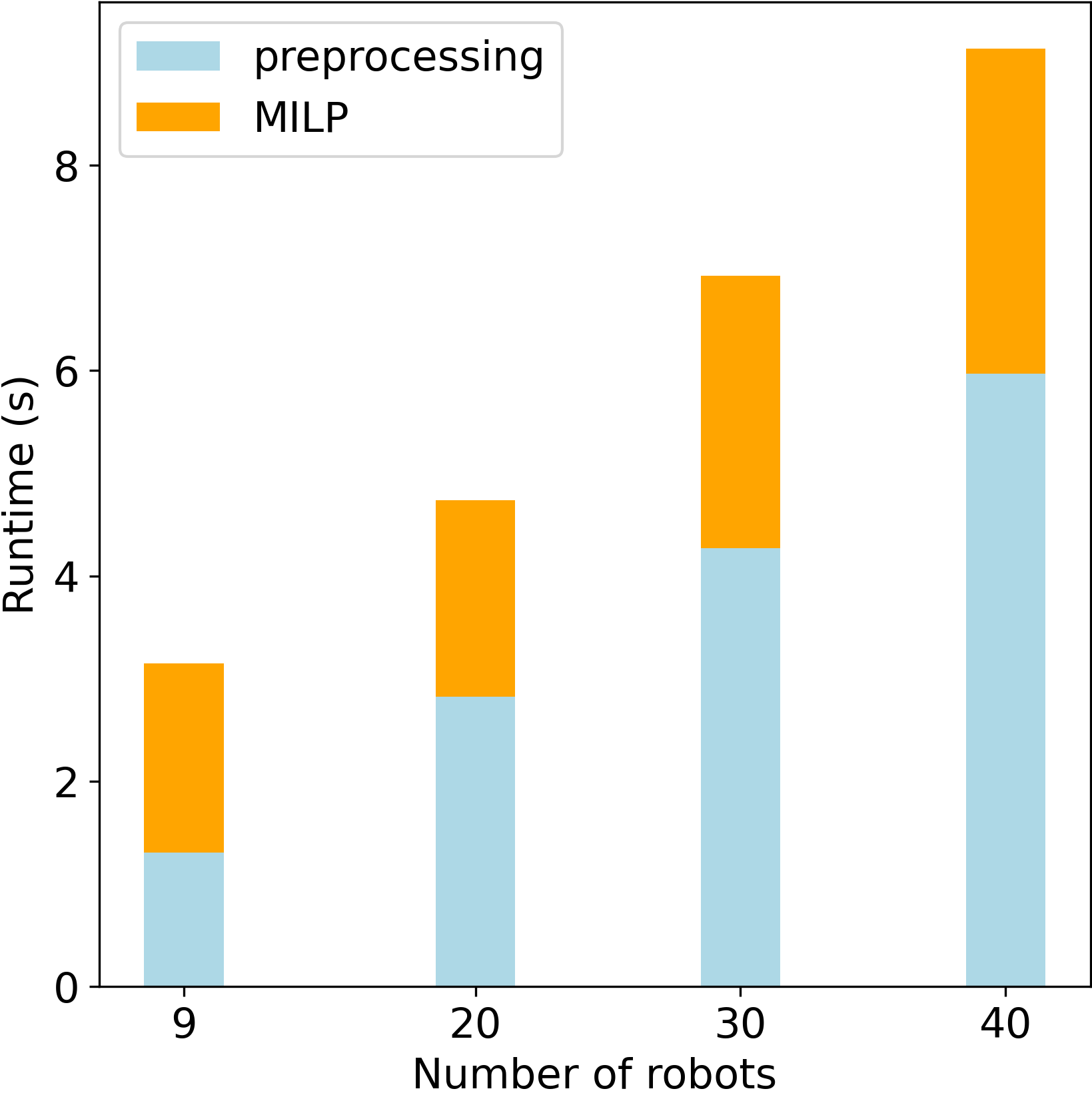}
        \caption{\texttt{bridge}}
        \label{fig:scale-bridge}
    \end{subfigure}
    \caption{\small Runtime for different number of robots.}
    \label{fig:scale}
    \vspace{-8pt}
\end{figure}

\section{Conclusion}
We propose an algorithm to operate a multi-robot system subject to STL specifications. Compared with other methods in the literature, our approach significantly 
reduces computational cost 
by decoupling of task allocation and coordination 
from motion planning and control 
while maintaining formal guarantees. Experimental results 
demonstrate efficiency 
and scalability to large robot teams and complex specifications.  





\bibliographystyle{IEEEtran}
\bibliography{references}

\end{document}